\documentclass{article}

\usepackage{arxiv}

\usepackage[utf8]{inputenc} 
\usepackage[T1]{fontenc}    
\usepackage{hyperref}       
\usepackage{url}            
\usepackage{booktabs}       
\usepackage{amsfonts}       
\usepackage{nicefrac}       
\usepackage{microtype}      
\usepackage{graphicx}
\usepackage{natbib}
\usepackage{doi}
\usepackage{amsthm}
\usepackage{xcolor}         
\usepackage{amsmath}
\usepackage{mathtools}
\usepackage{caption}
\usepackage{subcaption}
\usepackage{authblk}

\theoremstyle{plain}
\newtheorem{theorem}{Theorem}[section]

\newtheorem{lemma}[theorem]{Lemma}

\theoremstyle{definition}

\theoremstyle{remark}

\title{Semi-supervised Learning of Partial Differential Operators and Dynamical Flows}
\author[1]{Michael Rotman}
\author[2]{Amit Dekel}
\author[3]{Ran Ilan Ber}
\author[1]{Lior Wolf}
\author[4,5]{Yaron Oz}

\affil[1]{\footnotesize School of Computer Science, Tel-Aviv University, Israel}
\affil[2]{\footnotesize Univrses, Sweden}
\affil[3]{\footnotesize K Health, New York, NY}
\affil[4]{\footnotesize School of Physics and Astronomy, Tel-Aviv University, Israel}
\affil[5]{\footnotesize Simons Center for Geometry and Physics, Stony Brook, USA }

\hypersetup{
pdftitle={A template for the arxiv style},
pdfsubject={q-bio.NC, q-bio.QM},
pdfauthor={Rotman},
pdfkeywords={Deep Learning, Partial Differential Equations},
}
\newcommand{\rmse}{\text{Err}}
\newcommand{\lcomp}{{{\mathcal{L}}^{(P)}_{\text{comp}}}}
\newcommand{\linter}{{{\mathcal{L}}_{\text{inter}}}}

\begin{document}
\maketitle

\begin{abstract}
	The evolution of dynamical systems is generically governed by nonlinear partial differential equations (PDEs), whose solution, in a simulation framework, requires vast amounts of computational resources.
In this work, we present a novel method that combines a hyper-network solver with a Fourier Neural Operator architecture. Our method treats time and space separately.
As a result, it successfully propagates initial conditions in continuous time steps by employing the general composition properties of the partial differential operators. 
Following previous work, supervision is provided at a specific time point. We test our method on various time evolution PDEs, including nonlinear fluid flows in one, two, and three spatial dimensions. The results show that the new method improves the learning accuracy at the time point of supervision point, and is able to interpolate and the solutions to any intermediate time.
\end{abstract}

\section{Introduction}

The evolution of classical and quantum physical dynamical systems in space and time is generically modeled by non-linear partial differential
equations. Such are, for instance, Einstein equations of General Relativity, Maxwell equations of Electromagnetism, 
Schr\"{o}dinger equation of Quantum Mechanics and Navier-Stokes (NS) equations of fluid flows.
These equations, together with appropriate initial and boundary conditions, provide
a complete quantitative description of the physical world within their regime of validity.
Since these dynamic evolution settings are governed by partial differential operators that are often highly non-linear, it is rare to have analytical solutions for dynamic systems. 
This is especially true when the system contains a large number of interacting degrees of freedom in the non-linear regime.

Consider, as an example, the NS equations, which describe the motion of viscous fluids. In the regime of high Reynolds numbers of the order of one thousand, one observes turbulences, in which all symmetries are broken and all analytical techniques fail. The solution to these (deterministic) equations seems almost  random and is very sensitive to the initial conditions. Many numerical techniques have been developed for constructing and analysing the solutions to fluid dynamics systems. However, the complexity of these solvers grows quickly as the spacing in the grid that is used for approximating the solution is reduced and the degrees of freedom of the interacting fluid increases.

Given the theoretical and practical importance of constructing solutions to these
equations, it is natural to ask whether neural networks can learn such evolution
equations and construct new solutions.  The two fundamental questions are: (i) The ability to generalize to initial conditions that are different from those presented in the training set, and (ii) The ability to generalize to unseen time points, not provided during training.
The reason to hope that such tasks can be performed by machine learning is that despite 
the seemingly random behaviour of, e.g. fluid flows in the turbulent regime, there is an underlying
low-entropy structure that can be learnt.
 Indeed, in diverse cases, neural network-based solvers have been shown to provide comparable results to other numerical methods, while utilizing fewer resources.

{\bf Our Contributions\quad} We present a hyper-network based solver combined with
a Fourier Neural Operator architecture which is able to learn non-linear partial differential operators 
that govern the dynamics of chaotic and out of the equilibrium flows. 
\begin{enumerate}
\item 
Our hyper-network architecture treats time and space separately. Utilizing a data set of initial conditions and the corresponding solutions at a labeled fixed time,
the network learns a large class of time evolution PDEs. 
\item Our approach enables interpolation to arbitrary (unlabelled) continuous times without additional data points.
\item Our solutions improve the learning accuracy at the supervision time-points.
\item We thoroughly test our method on various time evolution PDEs, including non-linear fluid flows in one, two and three spatial dimensions.
\end{enumerate}

\section{Related Work}

{\bf Hyper-networks \quad}
While conventional networks employ a fixed set of pre-determined parameters, which is independent of the input, the hyper-network scheme, invented multiple times, and coined by \citet{DBLP:conf/iclr/HaDL17}, allows the parameters of a neural network to explicitly rely on the input by combining two neural networks. The first neural network, called the hyper-network, processes the input or part of it and outputs the weights of a second neural network. The second network, called the primary network, has a fixed architecture and weights that vary based on the input. It returns, given its input, the final output. 

This framework was used successfully in a variety of tasks, ranging from computer vision~\citep{littwin2019deep}, continual learning~\citep{Oswald2020Continual},  and language modeling~\citep{suarez2017language}. While it is natural to learn functions with hyper-networks, since the primary network can be seen as a dynamic, input-dependent function, we are not aware of any previous work that applies this scheme for recovering physical operators.

{\bf Neural network-based PDE solvers \quad}
Due to the well-known limitations of traditional PDE solvers on one hand, and in light of new advances made in the field of neural networks on the other, lately we have witnessed very significant progress in the field of neural network-based PDE solvers~\citep{karniadakis2021physics}.
These solvers can be roughly divided into two groups according to the resource they utilize for learning: data-driven and model-based.

Model-based solvers, known as Physics Informed Neural Networks (PINNs)~\citep{raissi2019physics}, harness the differential operator itself for supervision. This is done by defining a loss, the residual of the PDE. These solvers do not require a training dataset and can provide solutions for arbitrary times.

Data-driven solvers are trained over a large dataset containing initial conditions and observed final states after the same time interval, $T$. Among these solvers, there is the FCN, which utilizes an encoder-decoder architecture~\citep{zhu2018bayesian}, the PCANN~\citep{bhattacharya2020model} which applies Principal Component Analysis (PCA) as an auto-encoder and interpolates between the latent spaces using a neural network, The Multiple Graph Neural Operator (MGNO), the DeepONet which encodes input function and locations separately using two neural networks~\citep{LuLu2019DLno}. These solvers learn solutions on a specific discretized grid, which poses a limitation for any practical applications.

Recently, a mesh-invariant data-driven direction has been proposed~\citep{DBLP:journals/corr/abs-1910-03193,nelsen2021random,anandkumar2020neural,patel2021physics}. The mesh invariance is obtained by learning operators rather than mappings between initial and final states.
This is achieved using architectures that enforce mesh invariant network parameters. Such networks have the ability train and infer on different meshes. As a result, these networks can be trained on small grids and run, during inference, on very large grids.
\citet{li2020fourier} have advanced the mesh-invariant line of works by introducing Fourier Neural Operators (FNO). FNOs utilize both a convolution layer in real space and a Fourier integral layer in the Fourier domain. It has been shown that the FNO solver outperforms previous solvers in a number of important PDEs. 
The current data-driven methods successfully evolve a solution to the supervised time step.
By successive applications of these methods, solutions may be evolved in increments of $T$. Though the accuracy of these solutions may degrade, some statistical properties can still be obtained~\citep{DBLP:journals/corr/abs-2106-06898}. 
However, these methods are not designed for evolving solutions to intermediate times.
Knowing the solutions along the complete time evolution trajectory is highly valuable for theoretical and practical reasons and provides means for learning new dynamical principles.

\section{Partial Differential Equations}

Consider a $d$-dimensional vector field $\vec{v} (\vec{x},t): {\mathbb T}^d\times {\mathbb R} \rightarrow {\mathbb R}^d$, where $\vec{x}=(x_1,\dots,x_d)$ are periodic spatial coordinates $x_i \simeq x_i + 2\pi$ on the d-dimensional torus, $\mathbb{T}^d$, and $t$ is the time coordinate. The vector field evolves dynamically from the initial condition $\vec{v}(\vec{x}, t=0): {\mathbb T}^d \rightarrow {\mathbb R}^d$ according to a non-linear PDE of the form,
\begin{equation}
    \partial_t \vec{v}(\vec{x},t) = {\cal L}\vec{v}(\vec{x},t) \,,
    \label{eq:PDE}
\end{equation}
where $\mathcal{L}$ is a differential operator that does not depend explicitly on time, and has an expansion in $v$ and its spatial derivatives. 
We assume that given a regular bounded initial vector field there is a unique regular solution to Equation~\eqref{eq:PDE}. Since $\mathcal{L}$ does not depend explicitly on time, we can formally write the solution of such equations as 
 \begin{equation}
  \vec{v}(\vec{x},t)=e^{t\mathcal{L}}\vec{v}(\vec{x},0) \equiv \Phi_t \vec{v}(\vec{x},0) \,.
  \label{evo}
 \end{equation}

Because of dissipation terms, such as the viscosity term in the NS equations, solutions to Equation~\eqref{eq:PDE} generically break time reversal invariance $(t\rightarrow -t, \vec{v}\rightarrow -\vec{v})$.
Furthermore, the total energy of the system is non-increasing as a function of time since we are not injecting energy at $t\neq 0$, $\partial_t \int d^d x \frac{\vec{v}\cdot\vec{v}}{2}  \leq 0 $, and serves as a consistency on the network solutions.

In this work, we consider the following non-linear PDEs:

 {\bf Generalized Burgers equation}. Describes one-dimensional compressible fluid flows with scalar velocity field $v(x,t)$. We use the generalization parameter with $q=1,2,3,4$ ($q=1$ case corresponds to the regular Burgers equation),
     \begin{equation}
         \partial_t v + v^q\nabla v = \nu \Delta v \,,
     \end{equation} 
 where $\nu$ is the kinematic viscosity, $\nabla$ the gradient and $\Delta$ is the Laplacian
 
 {\bf  Chafee–Infante equation}. One-dimensional equation with a constant $\lambda$ parameter that models reaction-diffusion dynamics,
     \begin{equation}
         \partial_t v + \lambda (v^3 - v) = \Delta v \,.
     \end{equation} 
     
 {\bf Two-dimensional Burgers}. Describes compressible fluid flows in two space dimensions
 for the vector field, $\vec{v}(\vec{x},t) = (v^1,v^2)$,
     \begin{equation}
         \partial_t \vec{v} + (\vec{v}\cdot \nabla) \vec{v} = \nu \Delta \vec{v} \,.
     \end{equation}
 
 {\bf Two and three-dimensional NS}. Two and three-dimensional incompressible NS equations,
     \begin{equation}
         \partial_t \vec{v} + (\vec{v} \cdot  \nabla) \vec{v} = -\nabla p + \nu \Delta \vec{v},\quad \nabla\cdot \vec{v} = 0 \,,
     \end{equation}
     where $p$ is the fluid's pressure, $\vec{v}(\vec{x},t) = (v^1,\dots,v^d)$ and
     $\nu$ is the kinematic viscosity.

\section{Method}

Typically, a data-driven PDE solver, such as a neural network, evolves unseen velocity fields, $\vec{v}(\vec{x}, t=0)$ to a fixed time $T$,
\begin{equation}
    \Phi_T \vec{v}\left(\vec{x},t=0\right) = \vec{v}\left(\vec{x},T\right) \,,
\end{equation}
by learning from a set of $i=1\dots N$ initial conditions sampled at $t=0$, $\vec{v}_i\left(\vec{x},t=0\right)$, and their corresponding time-evolved solutions of $\vec{v}_i\left(\vec{x},t=T\right)$ of Equation~\eqref{eq:PDE}. 

We generalize $\Phi_T$, to propagate solutions at intermediate times, $0 \leq t \leq T$, by elevating a standard neural architecture to a hyper-network one. A hyper-network architecture is a composition of two neural networks, a primary network $f_w(x)$ and a hypernetwork $g_\theta(t)$ with learned parameters $\theta$, such that the parameters $w$ of $f$ are given as the output of $g$. Unlike common neural architectures, where a single input is mapped to a single output, in this construction, the input $t$ is mapped to a function, $f_{g_\theta(t)}$, which maps $x$ to its output. Applying this to our case we have,
\begin{equation}
\Phi_t \vec{v}(\vec{x},0) =f_{g_\theta(t)}(\vec{v}(\vec{x},0)) = \vec{v}(\vec{x},t).
\label{eq:hyper}
\end{equation}
This architecture may be used to learn not only the time-evolved solutions at $t=T$, but also intermediate solutions, at $0 < t < T$ , without explicitly providing the network with any intermediate time solution.
This task may be accomplished by utilizing general consistency conditions, which apply to any equation of the form of Equation~\eqref{eq:PDE}, 
\begin{equation}
\Phi_{t_1}\Phi_{t_2} =  \Phi_{t_2}\Phi_{t_1} = \Phi_{t_1+t_2} \,. 
\label{eq:consistency}
\end{equation}
Notice that $\Phi_0 = {\mathbb I}_d$ follows from Equation~\eqref{eq:consistency}. 
Consider the differential Equation~\eqref{eq:PDE}. Suppose we construct a map $\Phi_t$ 
such that $\Phi_T \vec{v}_i\left(\vec{x},t=0\right) = \vec{v}_i\left(\vec{x},t=T\right)$, where $\vec{v}_i\left(\vec{x},t=0\right)$ are all initial conditions (or form an appropriate complete set that spans all possible initial conditions)
for Equation~\eqref{eq:PDE} and $\vec{v}_i\left(\vec{x},t=T\right)$ are the corresponding solutions of the equation
at time $T$. We would like to know whether $\Phi_t \vec{v}_i\left(\vec{x},0\right) = \vec{v}_i\left(\vec{x},t\right)$ for all times $t\geq 0$, i.e. that the map $\Phi_t$ evolves all the initial condition
correctly as the solutions of Equation~\eqref{eq:PDE} for any time.

\begin{lemma}

A map $\Phi_t$ that propagates a complete set of initial conditions at $t=0$  to the corresponding solutions of 
\eqref{eq:PDE} at a fixed time $t=T$ and satisfies the composition law  
in Equation~\eqref{eq:consistency} propagates the initial conditions to the corresponding solutions 
for any time $t \geq 0$.

\end{lemma}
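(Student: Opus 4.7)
The plan is to identify the semigroup $\{\Phi_t\}_{t \geq 0}$ with the true PDE flow $\{\Psi_t\}_{t \geq 0}$ generated by $\mathcal{L}$, using the composition law together with uniqueness of regular solutions to~\eqref{eq:PDE}. First I would fix an arbitrary initial condition $\vec{v}_0$ from the complete set and set $\vec{u}(t) = \Phi_t \vec{v}_0$. Writing the composition law infinitesimally, $\vec{u}(t+h) = \Phi_h \vec{u}(t)$, and passing to $h \to 0$, yields an evolution equation $\partial_t \vec{u} = \widetilde{\mathcal{L}}\, \vec{u}$, where $\widetilde{\mathcal{L}}$ is the infinitesimal generator of $\{\Phi_t\}$. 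The true solution $\vec{v}(t) = \Psi_t \vec{v}_0$ solves the same Cauchy problem with $\mathcal{L}$ in place of $\widetilde{\mathcal{L}}$, and at $t=T$ we already have $\vec{u}(T) = \vec{v}(T)$ by hypothesis.

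The remaining task is to promote the single-time identity $\Phi_T = \Psi_T$ (on the complete set) to the generator identity $\widetilde{\mathcal{L}} = \mathcal{L}$, after which uniqueness of solutions to~\eqref{eq:PDE} closes the argument. One clean route appeals to uniqueness of the generator of a $C_0$-semigroup: since the complete set spans every admissible initial condition, the hypothesis reads $e^{T\widetilde{\mathcal{L}}} = e^{T\mathcal{L}}$ as operators, and the dissipative structure of $\mathcal{L}$ (its spectrum lies in the left half-plane, ruling out the familiar $2\pi i k / T$ branch ambiguity) then forces $\widetilde{\mathcal{L}} = \mathcal{L}$. A more elementary alternative works with rational multiples of $T$: for any positive integers $m,n$ the composition law yields $\Phi_{mT/n}^{\,n} = \Phi_{mT} = \Phi_T^{\,m} = \Psi_T^{\,m} = \Psi_{mT/n}^{\,n}$, so $\Phi_{mT/n}$ is a semigroup $n$-th root of the common value; uniqueness of this root inside the one-parameter family then gives $\Phi_{mT/n} = \Psi_{mT/n}$, and continuity of both flows in $t$ extends the identity to every $t \geq 0$ by density of $\{mT/n\}$ in $[0,\infty)$.

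The hard part is precisely this passage from agreement at the single time $T$ to agreement at all intermediate $t$. The composition law alone controls only the algebraic structure of the flow and in principle admits spurious branches of the $n$-th root; ruling them out requires either semigroup regularity combined with the dissipative nature of $\mathcal{L}$, or the analogous uniqueness of $n$-th roots inside the flow, together with continuity of $t \mapsto \Phi_t$. I would therefore make these continuity and spectral hypotheses explicit at the outset of the proof, since they are implicit in the informal statement of the lemma but are what ultimately distinguishes the true flow from other semigroups that happen to coincide with it at time $T$.
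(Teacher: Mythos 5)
Your ``elementary alternative'' is exactly the paper's proof: the paper partitions $[0,T]$ into $p$ equal parts, deduces $\Phi_{T/p}^{\,p} = \varphi_{T/p}^{\,p}$ from agreement at time $T$ on the complete set, extracts $\Phi_{T/p} = \varphi_{T/p}$ by appealing to continuity to fix the root ambiguity (which the paper only gestures at with a parenthetical sign remark), and then extends to all $t \geq 0$ via the composition law and density of the rational multiples. Your proposal is therefore correct and essentially the same argument, and is in fact more explicit than the paper about the delicate step (uniqueness of the $n$-th root inside the one-parameter family); the only caveat is in your optional generator route, where spectrum in the left half-plane alone does \emph{not} rule out the $2\pi i k/T$ ambiguity (e.g.\ $-1 + 2\pi i/T$ lies in the left half-plane), so that route would need real spectrum or self-adjointness, making your elementary route the sound one.
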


\begin{proof}

Denote by $\varphi_t$ the map that propagates correctly at any time $t$ the initial conditions 
to solutions of Equation~\eqref{eq:PDE}. 
The space of initial conditions 
$\vec{v}(\vec{x}, t=0): {\mathbb T}^d \rightarrow {\mathbb R}^d$
is a complete set of functions and the same holds for $\vec{v}(\vec{x},T)$.
Since $\varphi_T$ agrees with $\Phi_T$ on this set of initial condition it follows that $\varphi_T \equiv \Phi_T$.
Consider next a partition of the interval $[0,T]$ to $p$ identical parts,
$[0,\frac{T}{p}], [\frac{T}{p},\frac{2T}{p}],..., [\frac{(p-1)T}{p},T]$.
We have the composition of maps:
\begin{equation} 
\Phi_{\frac{T}{p}} \Phi_{\frac{T}{p}}\cdot\cdot\cdot \Phi_{\frac{T}{p}} \equiv \Phi_T \equiv  \varphi_T \equiv \varphi_{\frac{T}{p}}\varphi_{\frac{T}{p}}\cdot\cdot\cdot \varphi_{\frac{T}{p}} \ ,
\end{equation}
from which we conclude that $\varphi_{\frac{T}{p}} \equiv \Phi_{\frac{T}{p}}$ (there can be an overall sign in the relation between them when $p$ is taken to be odd which is fixed to one by continuity).
We can perform this division for any $p$ and conclude using the composition rule
in Equation~\eqref{eq:consistency} that $\Phi_t \equiv \varphi_t$ for any $t$.
\end{proof}

\subsection{Loss functions}
\label{sub:loss}

In this section we introduce the loss functions used to train our models. In order to simplify our notation, we will suppress the explicit space dependence in our equations, so $\vec{v}(\vec{x}, t=0)$ will be denoted by $\vec{v}(0)$.
We denote by $N$ the size of the training set of solutions.
Our loss function consists of several terms, supervised and unsupervised.

The information of the specific PDE is given to our model via a supervised term, 
\begin{equation}
    \mathcal{L}_{\text{final}} = \sum_{i=1}^N\rmse\left( \Phi_T \vec{v}_i\left(0\right) ,\vec{v}_i\left(T\right) \right)  \,,
    \label{eq:Lfinal}
\end{equation}
which is responsible for propagating the initial conditions to their final state at $t=T$. The reconstruction error function $\rmse$ is defined as,

\begin{equation}
    \rmse\left(\vec{a},\vec{b}\right) = \sqrt{\frac{\sum_\alpha\left(\vec{a}_\alpha - \vec{b}_\alpha \right)^2}{\sum_\beta \vec{b}_\beta^2}} \,,
\end{equation}
where the $\alpha,\beta$ indices run over the grid coordinates.

To impose the consistency condition
$\Phi_0 = {\mathbb I}_d$, we use the loss function term,
\begin{equation}
    \mathcal{L}_{\text{initial}} = \sum_{i=1}^N\rmse\left( \Phi_0 \vec{v}_i\left(0\right) ,\vec{v}_i\left(0\right) \right)  \,.
    \label{eq:Linitial}
\end{equation}

The composition law in Equation~\eqref{eq:consistency} is implemented by the loss function term,

\begin{equation}
       \mathcal{L}_{\text{inter}} = \sum_{i=1}^N\rmse\left(\Phi_{t^i_1+t^i_2} \vec{v}_i\left(0\right) , \Phi_{t^i_2}\Phi_{t^i_1}\vec{v}_i\left(0\right)\right) \,,
       \label{eq:Linter}
\end{equation}
where $\tilde T =t^i_1 +t^i_2$ is drawn uniformly from $\mathbb{U}[0,T]$ and $t^1_i$ is drawn uniformly from $\mathbb{U}[0,\tilde T]$, so the sum of  $t^i_1 +t^i_2$ does not exceed $T$.

The last term concerns the composition of the maps, $\Phi_t$, to generate $\Phi_T$
by $p$-sub-intervals of the interval $\left[0,T \right]$,
\begin{equation}
\prod_{j=1}^p \Phi_{t_j} = \Phi_T,\quad \sum_{j=1}^{p}t_j = T ,\quad
t_j > 0\,,
    \label{eq:par}
\end{equation}
and it reads,

\begin{equation}
    \mathcal{L}_{\text{comp}}^{(P)} = \sum_{i=1}^{N}\sum_{p=2}^P \rmse\left(\prod_{j=1}^p
    \Phi_{t_j} \vec{v}_i\left(0\right) , \vec{v}_i\left(T\right) \right) \,,
    \label{eq:Lcomp}
\end{equation}
where $P$ is the maximal number of intervals used. The time intervals are sampled using
${t_j} \sim \mathbb{U}\left[ {\frac{{j - 1}}{p}T,\frac{j}{p}T} \right]$, and the last interval is dictated by the constraint in Equation~\eqref{eq:par}. The $P=1$ term is omitted, since it corresponds to $\mathcal{L}_{\text{final}}$. The total loss function reads,
\begin{equation}
   \mathcal{L}_{\text{tot}}^{(P)} =  \mathcal{L}_{\text{final}}+
   \mathcal{L}_{\text{initial}}+
   \mathcal{L}_{\text{inter}}+
   \mathcal{L}_{\text{comp}}^{(P)}\,.
\end{equation}

\section{Experiments}

We present a battery of experiments in 1D, 2D and 3D. The main baseline we use for comparison is FNO~\citep{anandkumar2020neural}, which is the current state of the art and shares the same architecture as our primary network $f$. For the 1D Burgers equations, we also compare with the results of Multipole Graph Neural Operator (MGNO)~\citep{li2020multipole}. We do not have the results of this method for other datasets, due to its high runtime and memory complexity.

\subsection{Experimental Setup}

Our neural operator is represented by a hyper-network composition of two networks. For the primary network $f$,  we use the original FNO architecture. This architecture consists of four Fourier integral operators followed by a ReLU activation function. The width of the Fourier integral operators is $64$, $32$ and $20$, and the operators integrate up to a cutoff of $16$, $12$, $4$ modes in Fourier space, for one, two, and three dimensions, respectively. Recently, \citet{DBLP:journals/corr/abs-2108-08481} demonstrated that the FNO architecture may benefit from replacing the ReLU activation with a GeLU activation. Therefore, our results are reported using both variants, denoted by $+$ suffix for the GeLU variant.

For the hypernetwork $g$ we use a fully-connected network with three layers, each with a width of 32 units, and a ReLU activation function. $g$ generates the weights of network $f$ for a given time point $t$, thus mapping the scalar $t$ to the dimensionality of the parameter space of $f$.  A $\tanh$ activation function is applied to the time dimension,  $t$, before it is fed to $g$. Together, $f$ and $g$ constitute the operator $\Phi_t$ as appears in Equation~\eqref{eq:hyper}. 

All experiments are trained for $500$ epochs, with an ADAM~\cite{kingma2014adam} optimizer and a learning rate of 0.001, a weight decay of 0.0001, and a batch size of $20$. For MGNO, we used a learning rate of 0.00001, weight decay of 0.0005 and a batch size of 1, as in its official github repository. For our method, at each iteration and for each sample, we sample different time points for obtaining the time intervals of $\lcomp$ and $\linter$ as detailed in Section \ref{sub:loss}.

For the one-dimensional PDEs, we randomly shift both the input and the output to leverage their periodic structure. For one- and two-dimensional problems, the four loss terms are equally weighted. For the three-dimensional NS equation, the $\mathcal{L}_{\text{inter}}$ term was multiplied by a factor of $0.1$, the reason being that as we increase
the number of space dimensions we encounter a complexity due to the growing number
of  degrees of freedom (i.e. the minimum number of grid points per integral scale) required to accurately describe 
a fluid flow. Standard Kolmogorov's type scaling~\citep{frisch1995turbulence} implies that the number of degrees of freedom needed scales as $N_{d} \sim {\cal R}^{\frac{3d}{4}}$, where ${\cal R}$
is the Reynolds number. This complexity is seen in numerical simulations and we observe it in our framework
of learning, as well. All experiments were executed using an NVIDIA RTX 2080 TI with 24GB.

\subsection{Data Preparation}
Our experiments require a set of initial conditions at $t=0$ and time-evolved solutions at time $t=T$ for our PDEs.
For the simplicity of notations we will set $T=1$ in the following description. In all cases, the boundary conditions as well as the time-evolved solutions are periodic in all spatial dimensions.
The creation of these datasets consists of two steps. The first step is to sample the initial conditions from some distribution. For all of our one-dimensional experiments we use the same initial conditions from the dataset of \cite{li2020fourier}, which was originally used for Burgers equations and sampled using Gaussian random fields. For Burgers equation, we used solutions sampled at $512,1024,2048,4096,8192$, while for the other PDEs we used a grid size of $1024$. All one-dimensional PDEs assume a viscosity, $\nu=0.1$.

For the two-dimensional Burgers equation our data is sampled using a Gaussian process with a periodic kernel,
\begin{equation}
    k(x,y;x',y') = \sigma^2 e^{-\frac{2\sin^2(|x-x'|/2)}{l^2}} e^{-\frac{2\sin^2(|y-y'|/2)}{l^2}}\,,
\end{equation} 
with $\sigma=1$ and $l=0.6$, and with a grid size $64\times 64$ and a viscosity of $\nu=0.001$.
For the two and three-dimensional NS equation we use the $\phi_{\text{Flow}}$ package~\citep{holl2020phiflow} to generate samples on a $64\times64$ and $128\times128\times128$ grids, later resampled to $64\times64\times64$. For the two and three-dimensional NS, we use a viscosity of $\nu=0.001$.
In all cases the training and test sets consist of 1000 and 100 samples, respectively.

The second step is to compute the final state at $t=1$. For the equations in one-dimension and the two-dimensional Burgers equations we use the \texttt{py-pde} python package~\citep{Zwicker2020}(MIT License) to evolve the initial state for the final solution.
To evaluate the time evolution capabilities of our method, we compute all intermediate solutions with $\Delta t = 0.01$ time-step increments for the one-dimensional Burgers equation. For the two-dimensional Burgers equations we do this up to $t=1.15$, since at this regime the solutions develop shock waves due to low viscosity and we learn regular solutions. For the two-dimensional and three-dimensional NS equations we use $\phi_{\text{Flow}}$ package~\citep{holl2020phiflow}(MIT License) to increment the solutions with $\Delta t=0.1$.

\subsection{Results}

We compare the performance of our method on the one dimensional PDEs with FNO~\citep{li2020fourier} and MGNO~\citep{li2020multipole}. In this scenario, we use a loss term with two intervals, $P=2$. Since MGNO may only use a batch size of $1$ and requires a long time to execute, we present it only for the one-dimensional Burgers equation. As seen in Table \ref{tab:1dresults}, in all PDEs, our method outperforms the original FNO by at least a factor of 3. The addition of a GeLU activation function, as suggested by \citet{DBLP:journals/corr/abs-2108-08481}, is beneficial to FNO, but offers only a slight improvement to our method, and only on some of the benchmarks. 
We further compare the one-dimensional Burgers equation for various grid resolutions at $t=1$ in Figure \ref{fig:nshot10}. As previously noted by~\citet{li2020fourier}, the grid resolution does not influence the reconstruction error for any of the methods.

\begin{table*}[t]
\caption{The reconstruction error on the one-dimensional PDEs at $t= 1$, grid size = 1024.  B refers to Burgers equations,  GB to generalized Burgers equations,  and CI to Chafee–Infante equation.}
\label{tab:1dresults}
\begin{center}
\begin{tabular}{lllllllll}
\toprule
    Method &         B &          CI &     GB q=2 &     GB q=3 &     GB q=4\\
\midrule
MGNO & 0.0552 & -&-&-&-\\
       FNO &          0.0096 &          0.0028 &          0.0090 &          0.0095 &          0.0098\\
       FNO+ &          0.0022 &          0.0010 &          0.0032 & \textbf{0.0025} &          0.0055  \\
Ours &          0.0022 &          \textbf{0.0008} &0.0031 &          0.0030 &  \textbf{0.0030}\\
Ours+ & \textbf{0.0015} &          0.0011 & \textbf{0.0024} &          0.0026 &          \textbf{0.0030 }\\
\bottomrule
\end{tabular}
\end{center}
\end{table*}
\begin{figure}[t]
    \centering
    \includegraphics[width=0.45\linewidth]{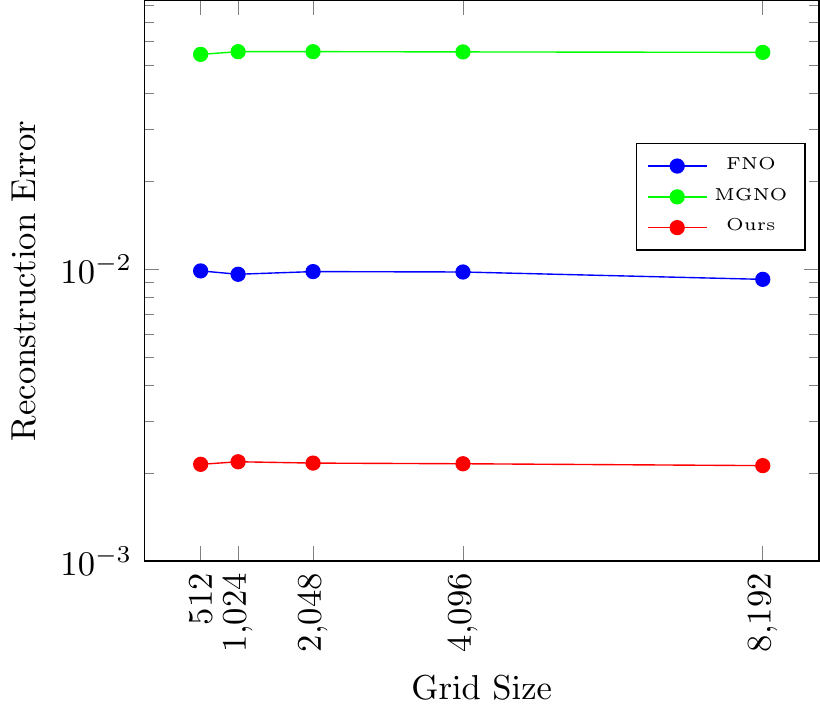}
    \caption{The reconstruction error on the one-dimensional Burgers equation for different grid resolutions.}
  \label{fig:nshot10}
\end{figure}

\begin{figure}[t]
    \centering
  \includegraphics[width=0.55\linewidth]{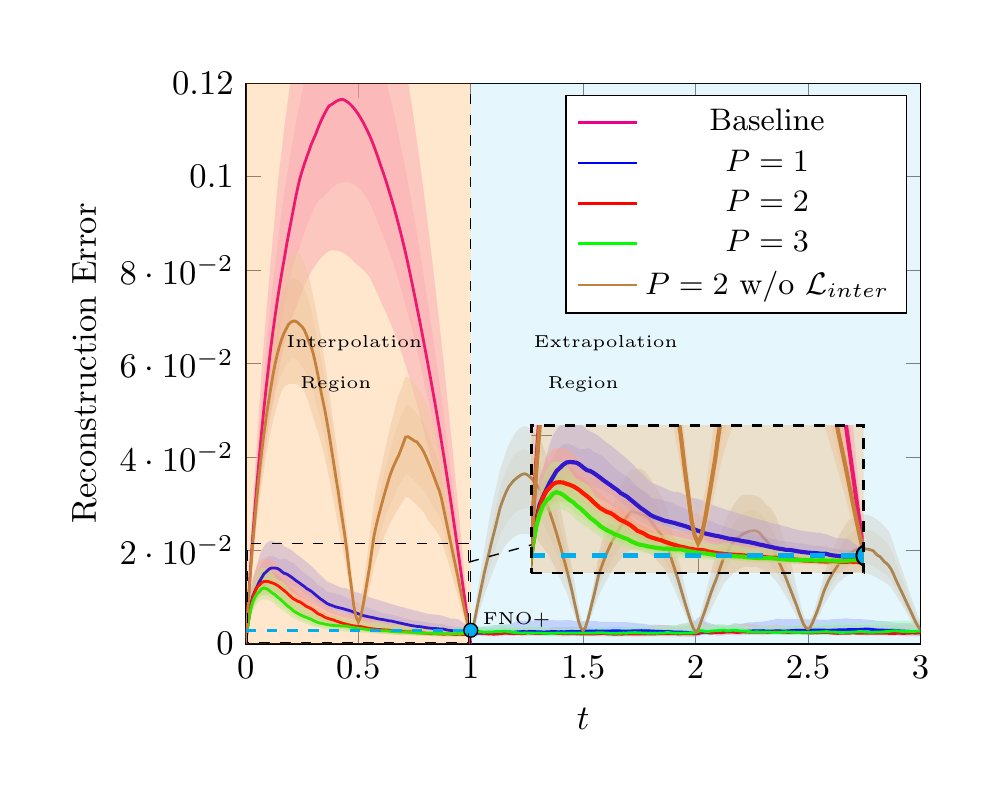}
\caption{ The reconstruction error on the one-dimensional Burgers equation for intermediate time predictions and the py-pde solver solutions for different numbers of intervals $P$ in the composition loss term, $\lcomp$. We show the median as well as the 10th, 25th, 75th and 90th percentiles. The statistics is based on 100 samples from the test set. The dashed horizontal cyan line marks the reconstruction error of FNO at $t=1$ for comparison.
}
  \label{fig:compose_0_1}

\end{figure}

To assess the contribution of the $\linter$ term and to account for the influence of the number of intervals induced by $\lcomp$, we evaluate the reconstruction error for the task of interpolating in $t\in[0,1]$ in comparison to the ground truth solutions in Figure~\ref{fig:compose_0_1}. As a baseline for the interpolation capabilities of our network, we further apply a linear interpolation between the initial and final solution, $v(t) = (1-t)v(0) + t v(1)$ (even though one uses the $t=1$ solution explicitly for that which gives clear and unfair advantage at $t=1$). The horizontal dashed cyan line marks the reconstruction error of the the FNO method at $t=1$, thus the interpolation our approach provides exceeds the prediction ability of FNO at discrete time steps. 
For extrapolation, $t>1$, we divide the time into unit intervals plus a residual interval (e.g. for $t=2.3$ we have the three intervals $1,1,0.3$), and we let the network evolve the initial condition to the solution at $t=1$, and then repeatedly use the solution as initial condition to estimate the solution at later times. As can be seen, the addition of the $\linter$ term dramatically improves the reconstruction error. Furthermore, increasing the number of intervals $P$ induced by $\lcomp$ reduces the reconstruction error, but with a smaller effect. The reconstruction error in the extrapolation region is low mainly because of the high energy dissipation. Note that for all of the presented variations, the reconstruction error of our method at $t=1$ is lower than FNO.

Table~\ref{tab:2dresults} contains a comparison of our method to FNO on the two dimensional PDEs, Burgers and NS equations. In two-dimensions the advantage of our method over the baseline at $t=1$ is relatively modest. However, a much larger advantage is revealed on intermediate time, as can be seen in Figures \ref{fig:interpolateburgers2d} and in  \ref{fig:interpolatenavier2d}. In both PDEs it is apparent that $\linter$ is an essential ingredient for providing a satisfactory interpolation, and that the required number of intervals, $P$, is not large. This is because $\linter$ enforces time-translation invariance, and therefore smooths the solution.  
\begin{table}[t]
\caption{The reconstruction error on the two-dimensional PDEs at $t=1$, Burgers and NS  equations.}
\label{tab:2dresults}
\begin{center}
\begin{tabular}{lll}
\toprule
                                   Method &      2D Burgers & 2D NS \\
\midrule
                                      FNO &          0.0336 &           0.4340 \\
                                      FNO+ & 0.0248 & 0.4511 \\
                               Ours  &          0.0335 &   0.4302 \\
                               Ours+ & \textbf{0.0214} & \textbf{0.4163} \\
\bottomrule
\end{tabular}
\end{center}
\end{table}

Table \ref{tab:3dresults} summarizes the reconstruction error on the three-dimensional NS equation at $t=1$. In this case, we applied our method with $P=1$ and ReLU activations for computational reasons, as a larger number of intervals required significantly more memory (during training only; during inference it requires a memory size similar to that of FNO) and GeLU activation cannot be applied "in-place". As can be seen, our method outperforms FNO and performs the best with the $\linter$ term in this benchmark as well.

\begin{figure}[t]
    \centering

\begin{subfigure}{.495\linewidth}
  \centering
 \includegraphics[height=0.8\linewidth]{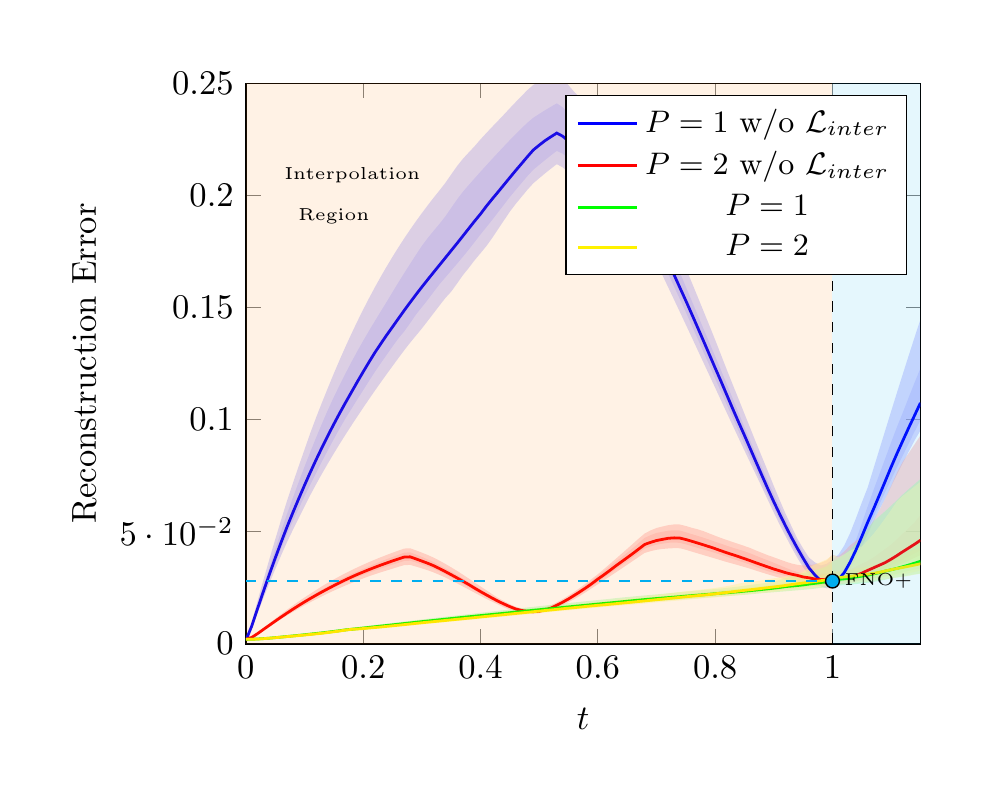}
  \caption{}
  \label{fig:interpolateburgers2d}
\end{subfigure}%
\hfill%
\begin{subfigure}{.495\linewidth}
  \centering
 \includegraphics[height=0.75\linewidth]{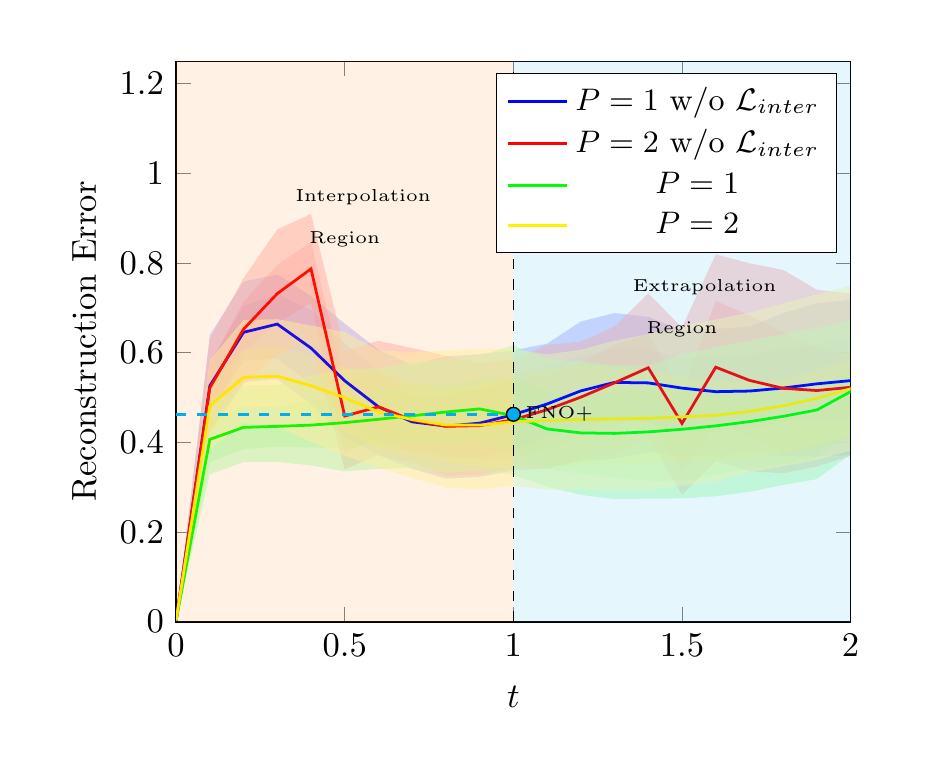}
  \caption{}
  \label{fig:interpolatenavier2d}
\end{subfigure}
    \caption{ The reconstruction error for intermediate time predictions and ground truth solver solutions for different number of intervals $P$ in the composition loss term, $\lcomp$. We show the median as well as the 10th, 25th, 75th and 90th percentiles. The statistics is based on 100 samples on the test set. The dashed horizontal cyan line marks the reconstruction error of FNO at $t=1$ for comparison. (a) two-dimensional Burgers equation. (b) two-dimensional NS equation.}
    \label{fig:interpolate}
\end{figure}

\begin{table}[t]
\caption{The reconstruction error on the three-dimensional NS equation at $t=1$.}
\label{tab:3dresults}
\begin{center}
\begin{tabular}{lc}
\toprule
                                   Method & 3D NS \\
\midrule
                                      FNO &           0.2778 \\
                                      FNO+  & 0.2644 \\
                               Ours w/o $\linter$ & 0.2675 \\
                               Ours+ w/o $\linter$  &  0.2672 \\
                               Ours  &  \textbf{0.2504} \\
                               
\bottomrule
\end{tabular}
\end{center}
\end{table}

\section{Discussion and Open Questions}

Lemma 1 points to practical sources of possible errors in learning the 
map $\Phi_t$. The first source of error is concerned with the training 
data set. Quantifying the set of initial conditions and solutions at time $T$, from which we can learn the map $\Phi_T$ with a required accuracy, depends on the complexity of the PDE. Burgers equation is integrable (for regular solutions) and can be analytically solved.
In contrast, NS equations are not integrable dynamical flows and cannot be solved analytically. Thus, for a given learning accuracy, we expect the data set needed for Burgers equation to be much smaller than the one needed for NS equations. Indeed, we see numerically the difference in the learning accuracy of these PDEs.
Second, there is error in the learning of the consistency conditions in Equation~\eqref{eq:consistency}. 
Third, we impose only a limited set of partitions in Equation~\eqref{eq:par}, which leads to an error that scales as $\frac{1}{P}$, where $P$ is the partition number of the time interval. We decreased this error by applying various such partitions. 
Note, that the non-convexity of the loss minimization problem implies, as we observe in the network performance, that the second source of error tends to increase when decreasing the third one and vice versa.
The difference in the complexity of the NS equations compared
to Burgers equation is reflected also in the interpolation performance, which is significantly better
for the Burgers flows in Figure~\ref{fig:interpolateburgers2d} compared to the NS flows in Figure~\ref{fig:interpolatenavier2d}.

Our work opens up several directions for further interesting  studies. First, it is straightforward to generalize our framework to other PDEs, such as having higher-order time derivatives, as in relativistic systems where
the time derivative is of order two.

Second, we only considered regular solutions in our work. It is known that singular solutions such as shock waves
in compressible fluids play an important role in the system's dynamics. 
The continuation of the solutions passed the singularity is known to be unique in one-dimensional Burgers
system. This is not the case in general as, for instance, for inviscid fluid flows modelled
by the incompressible Euler equations. It would be of interest to study how well our network learns such solutions and extends them beyond the singularity. 

Third, we learnt the dynamical equations at a fixed grid size of the spatial coordinates. It would be highly valuable if the network could learn to interpolate to smaller grid sizes.  FNO allows a super-resolution and hence an interpolation to smaller grid size. It does so by employing the same physics of the larger scale.
Learning the renormalization group itself can open up a window to learning new physics that appears at different scales. 

Fourth, studying the space of solutions and its statistics (in contrast to individual solutions) for PDEs such as NS equations is expected to reveal insights about the universal structure of the physical system. For example, it would be valuable to automatically gain insights into the major unsolved problem of anomalous scaling of turbulence. 

In our work we considered a decaying turbulence. In order to reach steady-state turbulence and study the 
statistics of the fluid velocity distribution one needs to introduce in the network a random force pumping energy to the system at the same rate as that of dissipation.
On general ground, one expects that for chaotic systems nearby trajectories would diverge exponentially in time,
$\Delta v(t) = \Delta v(0) e^{\lambda t}$.
This imposes, in general, a limit on the ability to learn single chaotic solution at long times
due to the error accumulation. We note, however, that this is a limitation 
of any numerical solver.

\section{Conclusions}

Learning the dynamical evolution of non-linear physical systems governed by PDEs
is of much importance, both theoretically and practically. In this work we presented a network scheme that learns the map from initial conditions to solutions of PDEs. The scheme bridges an important gap between data- and model-driven approaches, by allowing data-driven models to provide results for arbitrary times.

Our hyper-network based solver, combined with a Fourier Neural Operator architecture, 
 propagates the initial conditions, while ensuring the general composition properties of the partial differential operators. It improves the learning accuracy at the supervision time-points and interpolates and extrapolates the solutions to arbitrary (unlabelled) times. We tested our scheme successfully on non-linear PDEs in one, two and three dimensions.

\section*{Acknowledgments}
This project has received funding from the European Research Council (ERC) under the European Unions Horizon 2020 research and innovation program (grant ERC CoG 725974). The work of Y.O. is supported in part by the Israeli Science Foundation Center of Excellence.

\bibliography{biblo}
\bibliographystyle{unsrtnat}

\end{document}